\newtheorem{proposition}{Prop.}[section]
\theoremstyle{definition}
\newtheorem{definition}{Definition}[section]
  \providecommand\BibTeX{{%
    \normalfont B\kern-0.5em{\scshape i\kern-0.25em b}\kern-0.8em\TeX}}}
\title{A Fast Randomized Algorithm for Massive Text Normalization}
\author{%
  Nan Jiang$^1$, Chen Luo$^2$, Vihan Lakshman$^2$, Yesh Dattatreya$^2$, Yexiang Xue$^1$\\
  $^1$Purdue University \\
  $^2$Amazon\\
  \texttt{\{cheluo,vihan,ydatta\}@amazon.com}\\
  \texttt{\{jiang631,yexiang\}@purdue.edu}\\
  }
\begin{document}

\renewcommand{\shortauthors}{Jiang, et al.}

\begin{abstract}
Many popular machine learning techniques in natural language processing and data mining rely heavily on high-quality text sources. However real-world text datasets contain a significant amount of spelling errors and improperly punctuated variants where the performance of these models would quickly deteriorate. Moreover, real-world, web-scale datasets contain hundreds of millions or even billions of lines of text, where the existing text cleaning tools are prohibitively expensive to execute over and may require an overhead to learn the corrections.
In this paper, we present FLAN, a scalable randomized algorithm to clean and canonicalize massive text data. Our algorithm relies on the Jaccard similarity between words to suggest correction results. We efficiently handle the pairwise word-to-word comparisons via Locality Sensitive Hashing (LSH). We also propose a novel stabilization process to address the issue of hash collisions between dissimilar words, which is a consequence of the randomized nature of LSH and is exacerbated by the massive scale of real-world datasets. 
Compared with existing approaches, our method is more efficient, both asymptotically and in empirical evaluations, and does not rely on additional features, such as lexical/phonetic similarity or word embedding features. In addition, FLAN does not require any annotated data or supervised learning.
We further theoretically show the robustness of our algorithm with upper bounds on the false positive and false negative rates of corrections. 
Our experimental results on real-world datasets demonstrate the efficiency and efficacy of FLAN. Leveraging recent advances in efficiently computing minhash signatures, FLAN requires much less computational time compared to baselines text normalization techniques on large-scale Twitter and Reddit datasets. In a human evaluation of the quality of the normalization, FLAN achieves $5\%$ and $14\%$ improvement against the baselines over the Reddit and Twitter dataset correspondingly. Our method also improves performance on the perturbed GLUE benchmark datasets, where we introduce errors into the text, and Twitter sentiment classification applications. 
\end{abstract}



\keywords{Lexical Normalization, Locality-Sensitive Hashing, Natural Language Processing}


\maketitle
\section{Introduction}
Many Natural Language Processing (NLP) algorithms rely on high-quality text sources to obtain state-of-the-art results~\cite{10.1145/1401890.1401965,gudivada2017data}.
Recent studies have shown that model performance deteriorates when state-of-the-art models are evaluated on real-world noisy texts~\cite{DBLP:conf/aclnut/KreekA18,DBLP:journals/corr/abs-2005-00295,damaschk-etal-2019-multiclass}. 
%
%
Specifically, text data extracted from web sources such as Twitter, Reddit, and search query logs contain numerous instances of spelling errors, typos, and non-standard punctuation marks~\cite{sikdar2013cutting,DBLP:conf/emnlp/VolskePSS17}. This noise can render pretrained neural models trained on clean data sources ineffective and is challenging to clean with traditional text normalization methods on web-scale datasets.


This challenge motivates the need for \emph{lexical normalization}, which is the task of cleaning noisy input words into canonicalized forms.
Prior techniques for lexical normalization involve 1) combining similar words based on a rich set of features, such as phonetic similarity, lexical edit distances, $n$-gram probabilities, and word-embedding features~\cite{DBLP:journals/tist/HanCB13,DBLP:conf/emnlp/KajiK14}, 2) supervised learning, where annotated datasets are required to learn a correction mapping from unormalized words to normalized ones~\cite{DBLP:journals/ijdar/ChoudhurySJMSB07}, and 3) similarity search with word-embeddings, where the top-ranked words under a vector similarity measure are considered as the correction~\cite{DBLP:journals/siamcomp/ColeH02}.

In this paper, we present FLAN, a scalable randomized algorithm for lexical normalization. Compared with the existing methods, FLAN can 
1) \textit{eliminate the need for additional annotation for supervised learning}.
2) \textit{scale better on large datasets,}, especially those with hundreds of millions or billions of lines of text. 3) \textit{be robust to errors}, by reducing the likelihood of normalizing a word into a dissimilar one due to our proposed graph stabilization technique.

FLAN harnesses Locality-Sensitive Hashing (LSH)~\cite{DBLP:conf/aaai/ZhaoLM14} to find normalized words in a graph. FLAN consists of two stages, an \emph{indexing} step and an \emph{inference} step. The input to the indexing stage is the set of tokens found in the data (word unigrams in our experiments) and the output is a word-to-word directed graph, built via LSH, where all words in a connected component point to a canonicalized representative. At inference time, we use LSH again to hash an unknown word to its appropriate graph component and substitute this noisy word with the canonicalized representative from the graph. 

We further boost the probability of LSH bucketing similar words together by taking independent repetitions of the hashing process and building a weighted word-to-word graph where the weights represent the number of repetitions in which two tokens shared the same hash signature. As a stabilization step, we then remove those insignificant edges with weight below a predefined threshold. In the pruned graph, the words in every connected component are regarded as sharing the same meaning. This edge pruning operation reduces the likelihood of a word being normalized to a dissimilar one. 
We further derive upper bounds on the false positive and false negative rate of this graph construction process.

In our experiments, we compare FLAN with several popular text normalization methods over large-scale Twitter and Reddit datasets. 
In terms of running time, we find that FLAN is faster than baselines across both the indexing and inference stages. In a human evaluation on the correction quality across the Twitter and Reddit datasets, FLAN achieves a $5\%$ and $15\%$ higher F1-Score, respectively, against the competing methods. We also demonstrate the impact of FLAN on downstream NLP tasks. On the Twitter sentiment analysis and various perturbed GLUE benchmark tasks, FLAN demonstrates consistent improvement over the baselines. We also conduct an ablation study over the impact of threshold parameter on the algorithm's performance. We further provide a case study of applying FLAN in an industrial setting on the task of normalizing search queries and product titles on a dataset sampled from the search logs of a large e-commerce website. On this dataset with hundreds of millions of lines of text, we find that FLAN completed normalizing the data in a few hours while competing spell correction methods required days to finish. 
Our contributions in this paper can be summarized as follows:
\begin{itemize}
    \item We present an efficient algorithm for lexical normalization that uses the Jaccard similarity between words for lexical correction. To the best of our knowledge, this similarity measurement has not been fully explored in this domain and is different from existing word embedding search and lexical/phonetic edit distance models. Our technique does not require supervised training or annotated data. FLAN also scales better to large datasets thanks to the efficiency of LSH over competing algorithmic primitives.%
    \item  While LSH provides an efficient approach to map similar words together, its randomized nature introduces the possibility of dissimilar words sharing the same signature due to undesirable hash collisions, a problem that becomes very prevalent at massive scales. We address this challenge of dealing with unfavorable collisions through a novel approach of modeling the LSH outputs as a word-to-word graph and using multiple repetitions to identify connected components of similar entities in this graph. We show that FLAN is robust to errors and scales well to large datasets both theoretically and in our empirical experiments.
    \item We compare FLAN with several existing popular methods over different datasets, comparing the average running time, examining the quality of the word corrections via human evaluations, and providing several case studies for the performance over perturbed GLUE benchmark datasets,  Twitter sentiment analysis, and a large-scale product search dataset.
\end{itemize}

The remainder of the paper is organized as follows: Section 2 provides background information on lexical normalization and LSH; in Section 3, we present the details of the FLAN system, using LSH to hash individual words into bins and then employing a weighted word-to-word graph to determine how to convert tokens into a canonicalized representation; Section 4 presents our experimental studies showing the empirical advantages of our proposed algorithm; Section 5 recaps the contributions of this paper and also identifies some directions for future work in extending this method.

\section{Background \& Related Work}

\subsection{Lexical Normalization}
Recently, lexical normalization has received great interest with the advent of mobile computing and social networks~\cite{coddington2014correction,DBLP:conf/cikm/BonchiFNSV12}, where typing on a small keyboard increases the opportunity for typos, and the rise of social media~\cite{DBLP:conf/aclnut/BaldwinMHKRX15}, where users are accustomed to using slang, abbreviations, and other types of informal languages.
Lexical normalization refers to the process of transferring non-standard, informal, or misspelled tokens into their standardized counterparts as well as converting words of various tenses or pluralization into a consistent representation~\cite{muller2019enhancing}. This process has emerged as a crucial step to be able to utilize neural NLP models, which are often pretrained on clean text corpora, on noisy, real-world datasets.


Prior techniques in lexical normalization all involve either: 1) combining features, such as phonetic similarity and lexical distances with standard word and $n$-gram probabilities~\cite{DBLP:conf/cikm/IslamI09,DBLP:journals/tist/HanCB13,DBLP:conf/emnlp/KajiK14}, 2) supervised learning, where annotated datasets are required to learn a correction mapping~\cite{DBLP:journals/ijdar/ChoudhurySJMSB07}, or 3) nearest neighbor search within the space of word-embeddings, where the top-ranked words under the a vector similarity measure are considered as the correction candidates~\cite{DBLP:journals/siamcomp/ColeH02}.

In the literature, the classic approaches for lexical normalization usually encompass a combination of spelling correction, stemming~\cite{4410417}, and regular expression filtering~\cite{10.1145/980972.980996}. More recent works have introduced unsupervised statistical models for text cleaning~\cite{contractor2010cleansing,aw2006phrase} or combining multiple heuristics to identify and normalize out of vocabulary words~\cite{han2011lexical}. Another explored learning robust word representations through end-to-end neural networks as opposed to normalizing the data beforehand \cite{malykh-etal-2018-robust,doval2019towards} or directly fine-tuning the BERT models for lexical normalization task~\cite{muller2019enhancing}. Another group of works focus on directly learning over the subword level information, where character sequences or subword pairs are directly used for learning the representation without any correction steps~\cite{muller2019enhancing}.

However, there are several issues limiting the use of aforementioned approaches. The pattern of typos may vary across data sources and languages, possibly may require training separate supervised learning models or collecting additional labels. The current methods for lexical normalization are also either  prohibitively slow when applied over massive datasets or require expensive and time-consuming model training.

\subsection{Locality-Sensitive Hashing}
LSH is a family of functions, such that a function uniformly sampled from this family has the property that, under the hash mapping, similar points have a higher probability of having the same hash value~\cite{DBLP:conf/aaai/ZhaoLM14}. More precisely,
consider $\mathcal{H}$ a family of hash functions mapping points from $\mathbb{R}^d$ to a discrete integer set $\mathcal{U}$.

\begin{definition}[LSH Family~\cite{DBLP:conf/uai/Shrivastava015,DBLP:books/cu/LeskovecRU14}]\label{def:lsh}
A hashing family $\mathcal{H}$ is called $(R,\alpha R,p,q)$-sensitive if for any two points $x_i, x_j \in \mathbb{R}^d$  and function $h$ chosen uniformly from $\mathcal{H}$ satisfies the following properties: 
\begin{itemize}
	\item If $\text{sim}(x_i, x_j)\ge R$, then ${Pr}_{h\in\mathcal{H}}[h(x_i) = h(x_j)] \ge p$;
	\item If $\text{sim}(x_i, x_j)\le \alpha R$, then ${Pr}_{h\in\mathcal{H}}[h(x_i) = h(x_j)] \le q$.
\end{itemize}
In practice, we assume $p>q$ and $\alpha<1$.
\end{definition}

A \emph{collision} occurs when the hash values for two points are equal: $h(x_i) = h(x_j)$. The collision probability is proportional to some monotonic function of similarity between the two points: $Pr[h(x_i) = h(x_j)]$ $\propto$ $f(\text{sim}(x_i, x_j))$, where $\text{sim}(x_i, x_j)$ is the similarity under consideration and $f$ is a monotonically increasing function. Essentially, similar items are more likely to collide with each other under LSH mapping.

Minwise hashing (MinHash) is the LSH for set resemblance, also known as the Jaccard similarity~\cite{DBLP:journals/rsa/BroderM01}. The minwise hashing family applies a random permutation $\pi$ on the given set $S$, and stores only the minimum value after the permutation mapping. Given two sets $S_i, S_j$, the probability of the sets having the same MinHash value is the Jaccard  similarity between the given two sets:
\begin{equation}
Pr\left[\min \pi(S_i)=\min \pi(S_j)\right] = \frac{|S_i \cap S_j|}{|S_i \cup S_j|}    
\end{equation}

For computing several LSH signatures of the data vector, the last decade has witnessed a tremendous advance in reducing the amortized computational and memory requirements  For random projections based LSH, of which signed random projection is a special case, we can calculate $T$ LSH hashes of the data vector, with dimensions $d$, in time $O(d\log{d} + T)$, a significant improvement over $O(dT)$. This speedup is possible due to the theory of Fast-Johnson-Lindenstrauss transformation~\cite{DBLP:journals/siamcomp/AilonC09}. On the orthogonal side, even better speedup of $O(d + T)$ has been obtained with permutation-based LSH, such as minwise hashing, using ideas of densification~\cite{DBLP:conf/icml/Shrivastava17,DBLP:conf/icml/Shrivastava014,DBLP:conf/uai/Shrivastava014}. These drastic reductions in hashing time have been instrumental in making LSH based methods more appealing and practical and we leveragethese advances in our work.

In this research, we explore methods with Jaccard similarity between words. If two words have more subsequences or subwords in common, they would have a higher similarity with each other. This measure of similarity focuses solely on morphology of the words~\cite{aronoff1994morphology}, representing the structures and meanings within words. It does not incorporate the semantic or syntactic meaning, that requires the context of the words. We leave this study for future work.



\section{Lexical Normalization via Randomized Hashing}
\label{sec:method}

\subsection{Motivations}

To measure the distance or similarity between two words, extensive research has been conducted over two metrics: edit distance and cosine similarity. Edit distance and its variations, including Levenshtein, Damerau–Levenshtein, and Jaro-Winkler distance~\cite{cohen2003comparison} are all defined around computing the minimal sequence of edit operations (i.e., deletion, insertion, and replacement) for changing a given word into another.
Information on neighboring characters on keyboards as well as phonetic relationships are commonly applied to adjust the cost of deletion, insertion, and replacement. In the cosine similarity paradigm \cite{pmlr-v89-ding19a}, words are embedded into the Euclidean space, and the distance between two words is the angle between their corresponding word vectors. In the domain of lexical normalization, these two metrics require prohibitive computational cost when dealing with large data~\cite{DBLP:journals/tist/HanCB13,DBLP:conf/www/GuzmanL16}.
In this work, we consider Jaccard similarity as the similarity measurement between word pairs. Here, the Jaccard similarity is the ratio of character spans (or subwords) that two words share. The advantage of this metric is that it can handle web-scale data via recent algorithmic advances in computing LSH signatures~\cite{DBLP:conf/aaai/ZhaoLM14,DBLP:conf/uai/Shrivastava014}.

\subsection{Vocabulary as Lexically Similar Components}
\label{sec:flan}

\label{sec:detailed-lsh-word}
\begin{figure}[!t]
\centering
\includegraphics[width=.9\linewidth]{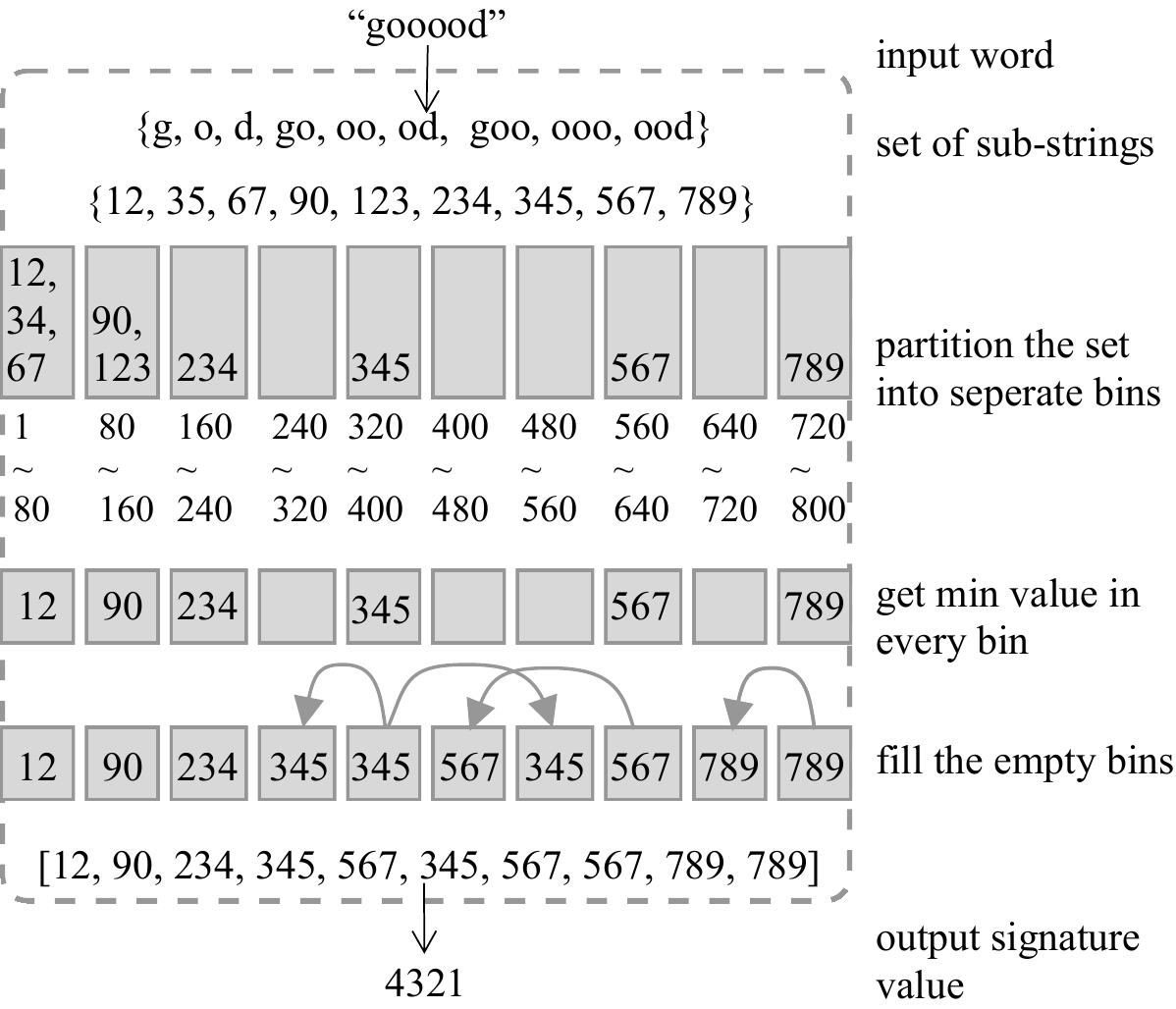}
\caption{An overview of hashing a word into a signature value via LSH. The input is a sequence of characters, sliced into substrings via the hyperparameter \texttt{CHARLENS}. Then, we use a 2-universal hash function to convert a string into a numerical (integer) value. Finally, we apply MinHash for the integer array to obtain a signature value of the input word.}
\label{fig:appendix-lsh}
\end{figure}

\noindent\textbf{Locality-Sensitive Hashing for Words.}
As shown in Figure~\ref{fig:appendix-lsh}, the detailed LSH algorithm is composed of several steps. For the first step, a word of $n$ characters $w_i=c_1c_2\cdots c_n$, is sliced into a set of substrings $\mathcal{S}(w_i)$: 
\begin{equation}
\mathcal{S}(w_i)=\{c_i\}_{i=1}^n\cup\cdots\{c_i\cdots c_{i+k}\}_{i=1}^{n-k}\cdots\cup\{c_1c_2\cdots c_n\}
\end{equation}
Here, $\mathcal{S}(w_i)$ is the union of substring sets. 
In our experiments, we introduce a hyper-parameter $\texttt{CHARLENS}$ to denote which substrings will be included in the set. For example, $\texttt{CHARLENS}=[1,3,5]$ signifies that the character-level unigram, trigram and 5-gram sets will be included into the overall set $S(w_i)$. 
If the substring length is longer than the input word length, its N-gram set is defined to be $\emptyset$. 

After obtaining the substring set, we use a hash function $h$ from a 2-universal hash family $\mathcal{H}$ to map every substring into a large universe $\mathcal{U} \subset \mathbb{N}$.

Next, we use one permutation hashing~\cite{DBLP:conf/nips/0001OZ12} to hash the output of above step. We partition the universe $U$ into bins and the set of hashed integers will be correspondingly partitioned. For example, in Figure~\ref{fig:appendix-lsh}, the universe $\mathcal{U}=\{1, 2, \dots, 800\}$ is partitioned into $10$ bins: $[1,80),\dots, [720,800]$. The integers $(12, 32, 56, 78)$ are put into the first bin $[1,80)$, and the other integers will be partitioned correspondingly. By the MinHash process, we only preserve the minimum value for those non-empty bins. For example, for the first bin, we would only preserve the minimum value of $12$. 
 
One existing issue of one permutation hashing is that we cannot have a signature for those empty bins. \citet{DBLP:conf/icml/Shrivastava17} proposes to borrow the signature value in the neighboring bins into the empty bin. In particular, for a given empty bin, we will flip a coin and borrow the first non-empty bin value from either the left or the right. This borrowing process is known as \emph{densification}. After this densification operation, we obtain an array of signature values to represent the input word $w$.

Next, we randomly hash the signature array $[s_1,\cdots,s_m]$ into an integer in another universe $\mathcal{U}'$. Here, we need another hash function $h'$ from the 2-universal hash family $\mathcal{H}$ that recursively hashes the array of signature values into one element. Each step takes the sum of the current signature value $s_i$ and the hashed value of the previous step $o_{i-1}$ as input. It will then output the hashed value for the current step: $o_i=h'(s_i+o_{i-1})$, where we use the last element value $o_m$ as the signature value for the input word $w$. We show the detailed process of mapping an input word into a signature in the universe $U'$ in Figure~\ref{fig:appendix-lsh}.

To conclude, given two words $w_i$ and $w_j$, the probability of the words having the same signature value is proportional to the Jaccard similarity of the two words. 
The probability of the event that two words $w_i,w_j$ will have the same signature value ($h(w_i)=h(w_j)$) by the LSH algorithm is proportional to their Jaccard similarity~\cite{DBLP:conf/aaai/ZhaoLM14}:
\begin{equation}
\begin{aligned}
&Pr[h(w_i)=h(w_j)]\propto \frac{|\mathcal{S}({w_i}) \cap \mathcal{S}({w_j})|}{|\mathcal{S}({w_i}) \cup \mathcal{S}({w_j})|}
\end{aligned}
\end{equation}
Here we make the assumption that all words grouped together via their signature value are lexically similar (as shown in Figure~\ref{fig:whole}). These grouped words usually are the variant of one canonical representation, which we call the \emph{representative} word. 
In this work, we use this representative to replace all of the grouped words to normalize the text data.

\begin{figure}[!t]
\centering
\includegraphics[width=.83\linewidth]{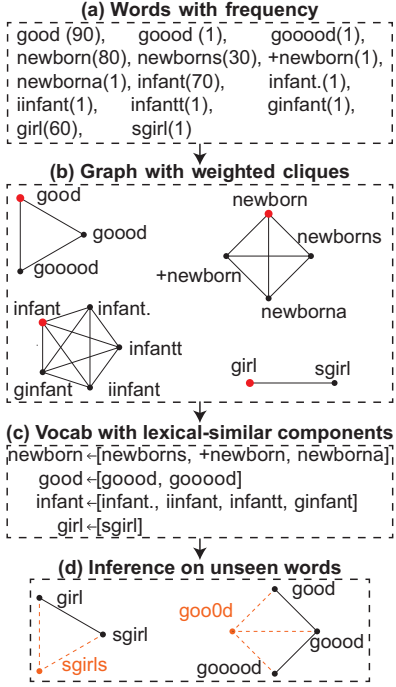}
\caption{The pipeline of FLAN algorithm. (a) Inputs are a list of words with their frequencies, which are then converted into a graph with weighted cliques via randomized hashing. (b) Afterward, we prune those in-significant edges and consider every component as words sharing the same meaning. (c) The output is a linked list style vocabulary, with all the similar words point to their pivots. (d) In inference, the new words use randomized hashing to validate if it is similar to words in vocabulary $\mathcal{V_G}$.}
\label{fig:whole}	
\end{figure}

However, due to the other property of LSH families introduced in Definition ~\ref{def:lsh}, dissimilar words can also have the same signature with an unavoidable small probability.
For real-world datasets with millions of distinct words, the chance of at least one pair of dissimilar words sharing a signature becomes significant, which leads to poor results. Motivated by the Count-Min Sketch data structure~\cite{DBLP:journals/jal/CormodeM05,luo2017arrays,chen2018}, we propose a graphical stabilization method to greatly decrease the likelihood of a word mapping to any dissimilar words while at the same time maintaining a high likelihood among similar words.

\noindent\textbf{Stabilization by Weighted Cliques.}
We stabilize the word mappings by repeating the hashing process $T$ ($T\ge 1$) times and aggregating the results using a graph data structure~\cite{luo2017arrays}. In the graph $\mathcal{G}$, let every vertex $w_i$ represent a word in the dataset such that the number of vertices in the graph is equal to the number of unique words in the dataset. We also define the edge weight $e(w_i , w_j)$ to be the number of times two words $w_i, w_j$ have the same signature value, where $e(w_i , w_j)>0$. If two words have no hash collision, then they do not have edge between them.
In one repetition, words sharing the same signature form a clique. For $T$ independent repetitions, we have a weighted graph where the weight of each edge represents the number of repetitions in which two words shared the same LSH signature. Figure~\ref{fig:whole}(b) provides an illustration of this LSH word-to-word graph. 

Furthermore, we introduce an extra criterion with a threshold parameter $\alpha\in[0,1]$ to determine if an edge weight is significant. Insignificant edges are pruned to decrease the likelihood of a word mapping to any dissimilar words. To be specific, if $e(w_i,w_j)\ge \alpha T$, then the two words $w_i,w_j$ are assumed to be sufficiently similar; if there is no edge between words $w_i$ and  $w_j$ or $e(w_i,w_j)< \alpha T$, then these two words are considered as distinct entities and we remove any edge between them in the graph. After $T$ repetitions and edge pruning, we interpret the words left in every connected component as sharing the same meaning. Note that $\alpha=0$ case means no edge will be pruned, which is simply the union of edges over all repetitions. For the $\alpha=1$ case, only words with the same signature across all the repetitions are preserved, which is the intersection for all the repetitions.

Finally, the output of the algorithm is a linked list-style vocabulary $\mathcal{V_G}$, where the lists of misspelled words are pointed to their representatives. Here we let the most frequent token in every connected component be the representative for this group of similar words. Figure~\ref{fig:whole}(c) gives one example of the output.


\noindent\textbf{Inference Criterion.}
Once we create a vocabulary $\mathcal{V_G}$ by FLAN, we can go through our input dataset and remap words to their morphological representatives. However, in the inference stage, there may exist words in the testing set that are not covered simply because they were not present in the indexing corpus. Thus, we introduce a criterion to decide if these new unseen words can be mapped back to the FLAN graph and determine which word in the vocabulary would be the best fit. 

Given the extracted vocabulary $\mathcal{V_G}$ from the indexing procedure and an unseen word $w_r$ in the testing set, we apply the LSH method for the word $w_r$ and check if the word $w_r$ would have a collision with any word in the vocabulary. After $T$ repetitions, we would have several edges that link from a set of words $ \{w_i\}_{i=1}^l$ in the vocabulary to this word $w_r$. Then we reuse our prior criterion: $e({w_r},{w_i})>\alpha T,\forall w_i\in \mathcal{V}, e(w_r,w_i)\in\mathcal{G}$, for every edge that link to $w_r$. If there are no edges ($l=0$) or none of them satisfy the criterion, this word is claimed as not similar to any words in the vocabulary. If we find more than one satisfying words, we pick the word with the largest weight. Figure~\ref{fig:whole}(d) present two examples.

\subsection{Error Analysis of Similarity Estimation}
Given a set of $N$ distinct words $\{w_i\}_{i=1}^N$ and several clusters $\{c_1,c_2,\cdots\}$, each word belongs to one and only one cluster. Let $\mathcal{C}(\cdot)$ be a mapping from a node $w_i$ to its appropriate partition. Each partition $c_k$ can be viewed as a connected component of lexically similar words.
Similar to Definition~\ref{def:lsh},
let $p$ denote the minimum probability of an intra-cluster edge and $q$ be the maximum probability of an inter-cluster edge. Let $\mathcal{G}$ denote the FLAN graph as described in the previous section. The probability of the graph having the edge $e(w_i, w_j) \in \mathcal{G}$ is:
\begin{equation}
 Pr\left[e(w_i, w_j) \in \mathcal{G}\right] \begin{cases}
\ge p \quad \text{if } \mathcal{C}(w_i)=\mathcal{C}(w_j)\\
\le q \quad \text{otherwise}
\end{cases}   
\end{equation}
In practice, we can think of $p\gg q$, as we expect the lexical similarity between words in the same component to be larger than those across the components. LSH seeks to estimate $p$ and $q$ as modeling the Jaccard similarity between words. The stabilization step is $T$ coin flips with probabilities $p$ or $q$. We first upper bound the probability of an unrelated word being included in the wrong connected component, which is the false positive probability. Then, we bound the probability that a word will not be assigned to its proper cluster by edge pruning, referred to as the false negative probability. 
\begin{proposition}[False Positive Probability]
Fix a node $w_i$, the probability that FLAN will connect $w_i$ to a node in cluster $c$ where $c \ne \mathcal{C}(w_i)$ is at most $|c|\exp\left(\frac{-T(q-\alpha)^2}{3q}\right)$ where $|c|$ is cluster $c$'s size.
\end{proposition}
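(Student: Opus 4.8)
The plan is to reduce the statement to a one-sided tail bound on a single edge weight and then union bound over the at most $|c|$ nodes of $c$. Fix $w_i$ and an arbitrary $w_j \in c$; since $c \ne \mathcal{C}(w_i)$, the pair $(w_i, w_j)$ is inter-cluster, so in each of the $T$ independent hashing repetitions the two words collide with some probability $q_{ij} \le q$. Consequently the edge weight $e(w_i, w_j) = \sum_{t=1}^{T} X_t$ is a sum of $T$ i.i.d.\ Bernoulli$(q_{ij})$ indicators with mean $\mu = T q_{ij} \le T q$. By the pruning rule, FLAN retains an edge between $w_i$ and $w_j$ precisely when $e(w_i, w_j) \ge \alpha T$, so it suffices to bound $\Pr[e(w_i,w_j) \ge \alpha T]$ and multiply by $|c|$.

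For the per-edge bound I would apply the multiplicative Chernoff inequality. Writing $\alpha T = (1+\delta)\mu$ with $\delta = \alpha/q_{ij} - 1$ — which is positive in the relevant regime $\alpha > q \ge q_{ij}$ (the regime in which pruning can actually suppress inter-cluster edges; for $\alpha \le q$ the claimed bound need not hold) — the estimate $\Pr[e(w_i,w_j)\ge(1+\delta)\mu]\le\exp(-\delta^2\mu/3)$ gives
\begin{equation}
\Pr[e(w_i,w_j)\ge\alpha T]\le\exp\!\left(-\frac{\delta^2\mu}{3}\right)=\exp\!\left(-\frac{T(\alpha-q_{ij})^2}{3\,q_{ij}}\right)\le\exp\!\left(-\frac{T(q-\alpha)^2}{3q}\right),
\end{equation}
where the last inequality holds because $x\mapsto (\alpha-x)^2/x$ is decreasing on $(0,\alpha)$, so the exponent is smallest at $q_{ij}=q$ (alternatively, dominate $e(w_i,w_j)$ stochastically by a $\mathrm{Binomial}(T,q)$ variable before applying Chernoff). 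A union bound over the at most $|c|$ choices of $w_j\in c$ then yields $\Pr[\exists\, w_j\in c:\ e(w_i,w_j)\ge\alpha T]\le |c|\exp(-T(q-\alpha)^2/(3q))$, which is the claim.

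I expect the only genuine subtlety to be bookkeeping around the parameter regime and the precise meaning of \emph{connect}. The clean $\exp(-\delta^2\mu/3)$ form of Chernoff is the one valid for $\delta\in(0,1]$, i.e.\ $\alpha\le 2q$; to cover $\alpha>2q$ one would instead invoke $\exp(-\delta^2\mu/(2+\delta))$ and verify it still dominates $\exp(-T(q-\alpha)^2/(3q))$, so the stated constant $3q$ is really tied to the regime $q<\alpha\le 2q$ (which is the interesting one, since $q$ is tiny and $\alpha$ is a moderate threshold). Second, the argument above bounds the probability of a \emph{surviving direct edge} from $w_i$ into $c$; if \emph{connect} is meant in the connected-component sense one must additionally argue that every $w_i$-to-$c$ path in the pruned graph contains such an edge, or else replace the endpoint union bound by a union bound over paths. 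I would state the proposition under the direct-edge reading, in which case the per-edge Chernoff estimate plus the union bound is the whole proof and no step is technically hard.
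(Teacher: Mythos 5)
Your proof follows essentially the same route as the paper's: a union bound over the at most $|c|$ nodes of $c$ combined with the multiplicative Chernoff bound obtained by setting $\delta = \alpha/q - 1$, which yields exactly the exponent $\frac{-T(q-\alpha)^2}{3q}$. Your additional bookkeeping (dominating $q_{ij}\le q$ by monotonicity or stochastic domination, flagging the regime $q<\alpha\le 2q$ where the $\exp(-\delta^2\mu/3)$ form is valid, and adopting the direct-edge reading of ``connect'') simply makes explicit steps the paper leaves implicit.
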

\begin{proof}
Recall that $e(w_i, w_j)$ denotes the weight assigned to edge $(w_i, w_j)$. Using a union bound and a Chernoff bound, we have
\begin{equation}
\begin{aligned}
Pr[\exists w_j \in c,  e(w_i, w_j) \in \mathcal{G}] &\le \sum_{j=1}^{|c|} Pr\left[e(w_i, w_j) \ge \alpha T\right] \\
&\le |c|\exp\left(\frac{-T(q-\alpha)^2}{3q}\right)
\end{aligned}
\end{equation}
where the second inequality follows from setting $\delta = \alpha/q - 1$.
\end{proof}

\begin{proposition}[False Negative Probability]
Fix a node $w_i$, the probability that FLAN will not add an edge from $w_i$ to any of the other nodes in $c=\mathcal{C}(w_i)$ is at most $\exp\left(\frac{-|c|T (p-\alpha)^2}{2p}\right)$.
\end{proposition}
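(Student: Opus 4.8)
The plan is to mirror the structure of the false positive proof but to bound the probability that \emph{all} intra-cluster edges from $w_i$ are simultaneously pruned. First I would fix $w_i$ and consider the $|c|-1$ other nodes $w_j \in c = \mathcal{C}(w_i)$. For each such $w_j$, the edge weight $e(w_i,w_j)$ is a sum of $T$ independent Bernoulli trials, each succeeding (i.e., the two words collide under one repetition of LSH) with probability at least $p$. The ``bad'' event for a single edge is $e(w_i,w_j) < \alpha T$, i.e., the weight falls below the pruning threshold. A Chernoff lower-tail bound gives $Pr[e(w_i,w_j) < \alpha T] \le \exp\!\left(\tfrac{-T(p-\alpha)^2}{2p}\right)$ by setting $\delta = 1 - \alpha/p$ in the standard bound $Pr[X \le (1-\delta)\mu] \le \exp(-\delta^2 \mu / 2)$ with $\mu \ge pT$.

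Next I would observe that FLAN fails to attach $w_i$ to \emph{any} node in $c$ only if the bad event occurs for \emph{every} one of the $|c|-1$ candidate edges. Here the key structural point is that the collisions defining distinct edges $e(w_i, w_j)$ and $e(w_i, w_{j'})$ across the $T$ repetitions are governed by independent draws of the hash functions, so I would treat these $|c|-1$ events as independent (or at least argue the joint probability is bounded by the product, which is what the stated bound $\exp(-|c|T(p-\alpha)^2/(2p))$ reflects up to the $|c|$ versus $|c|-1$ slack). Multiplying the per-edge bounds then yields $Pr[\text{no edge from } w_i \text{ into } c] \le \exp\!\left(\tfrac{-T(p-\alpha)^2}{2p}\right)^{|c|-1} \le \exp\!\left(\tfrac{-|c|T(p-\alpha)^2}{2p}\right)$, absorbing the off-by-one into the (loose) constant, which matches the claim.

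The main obstacle is the independence justification in the second step: within a single repetition, whether $w_i$ collides with $w_j$ and whether it collides with $w_{j'}$ are \emph{not} independent (they are both determined by $w_i$'s own signature in that repetition), so one cannot naively multiply. I would handle this either by noting that the statement only claims an upper bound and invoking a correlation inequality (the events ``$e(w_i,w_j)$ large'' are positively associated, but we need the complementary ``all small'' direction, so FKG-type arguments require care), or — more in the spirit of this paper — by simply asserting the repetitions are independent and the per-edge tail bounds compose, which is the level of rigor used in the companion False Positive proposition. A cleaner alternative I would mention is to condition on $w_i$'s sequence of $T$ signatures and argue that, conditionally, the indicator that $w_j$ matches $w_i$ in repetition $t$ still has success probability $\ge p$, and then the events for different $j$ become conditionally independent; this makes the product bound legitimate. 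Beyond that, the remaining work is the routine Chernoff computation and verifying $\delta = 1 - \alpha/p \in (0,1)$, which holds under the implicit assumption $\alpha < p$.
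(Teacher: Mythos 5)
Your proposal takes essentially the same route as the paper: a per-edge Chernoff lower-tail bound with $\delta = 1-\alpha/p$ giving $\exp\left(\frac{-T(p-\alpha)^2}{2p}\right)$ per candidate edge, multiplied over the nodes of $c=\mathcal{C}(w_i)$ under an assumption of edge independence. The paper simply asserts that ``the presence of each edge is an independent event'' and writes the joint probability as the product over $|c|$ terms, so your discussion of the within-repetition dependence (and the $|c|$ versus $|c|-1$ slack) is, if anything, more careful than the published argument.
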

\begin{proof}
We note that $w_i$ does not share edge with some other word $w_k \in c$ in FLAN graph if the edge weight is smaller than $\alpha T$ after applying $T$ repetitions. By the fact that the presence of each edge is an independent event and another Chernoff bound, we have that
\begin{equation}
\begin{aligned}
Pr\left[\forall w_k \in c, e(w_i, w_k) \notin \mathcal{G}\right] &= \prod_{k=1}^{|c|} Pr[e(w_i, w_k) \le \alpha T] \\
&\le \exp\left(\frac{-|c|T (p-\alpha)^2}{2p}\right)
\end{aligned}
\end{equation}
where the second inequality follows from setting $\delta = 1 - \alpha/p$.
\end{proof}

The propositions imply that the probability of a false positive and false negative event decreases exponentially with more repetitions. Furthermore, we note that one meaningful difference between the two bounds is the dependence on the cluster size $|c|$. In the first case, a larger size increases the error probability while it decreases this quantity in the latter bound. 

\noindent\textbf{Running Time Analysis.}  In FLAN, the time complexity of computing the hash values is $\mathcal{O}(LNT)$, where $N$ is the number of words in the dataset, $T$ the number of repetitions of LSH, and $L$ the average number of characters in a word in the data set. Here $L$ is usually small and $L \ll N$.
Afterward, the complexity of constructing the graph is $M^2 B$, where $M$ is the expected number of items in each bucket of the hash table, and $B$ is the number of buckets.
The final graph pruning takes $\mathcal{O}(N)$ time to finish. Thus, the overall computational complexity is $\mathcal{O}(LNT + M^2B + N)$. In practice, $N$ is on the order of millions or billions and thus dominates asymptotically, so we can simplify the previous bound to $\mathcal{O}(LNT)$. Note that the major speedup of this method comes from prior breakthrough in efficiently computing the MinHash signatures~\cite{DBLP:journals/rsa/BroderM01}.

For comparison, a spell correction algorithm based on edit distance runs in time $\mathcal{O}(LNX)$, where $X$ is the number of possible characters to be deleted, replaced, or inserted. Given a word, a spell corrector will consider all the neighboring words with, for example, one and two steps of edit distances, then pick the neighboring word with maximum score in the dictionary. Usually, the number of possible characters $X$ is much larger than the number of repetitions $T$. FLAN would further improve upon the speed of edit distance-based algorithms in distributed settings where we can compute these repetitions in parallel.

\subsection{Connection to Existing Approaches}
The distance measurement used in our method is an extension and relaxation of classic stemming operations~\cite{10.5555/188490.188499}, where two words with the same stem would be of identical meaning. Our method would not only identify two words sharing the same prefix or suffix strings with high similarity, but also any subsequences of the word based on the composition of $\mathcal{S}(w)$.

The FLAN graph $\mathcal{G}$ also captures common tendencies in human errors, such as substituting adjacent characters on a keyboard or similar-sounding characters. It reduces the effort of generating features for finding patterns in typos. For a connected component of the graph, words with adjacent or similar sounding characters are included with high probability. These misspelled words are then mapped to the representative word in the final pruned graph.

Popular spell correction methods like Hunspell and Aspell\footnote{\url{http://aspell.net/}} find words that have a ``sounds like" word within a given edit distance of the original string. For FLAN, the words in a given connected component of the graph include those with small edit-distance with high probability, but this component will also likely to include words with longer edit distances, offering a dynamic and generalized way for correction. Experiment evidences for this property are collected in Table~\ref{tab: result}.

Moreover, supervised learning methods that build upon rich feature sets about human typing and spelling patterns work well on small-scale and domain-specific datasets. However, different languages and various data domains usually require adjustments, additional labeled annotations, and further feature engineering. Such expert knowledge becomes quite expensive to acquire when we scale to massive data and various languages. Our method, with no such features over typing, spelling, devices, or languages, uses multiple repetitions and pruned edge weights as statistical estimators. FLAN can effectively and efficiently normalize words to a canonical form without any supervised learning, annotations, or feature engineering.

\section{Experimental Study}
\label{sec:experiment}


\subsection{Experiments Setup}
\noindent\textbf{Datasets}. We consider datasets from Twitter, Reddit, the GLUE benchmark~\cite{wang-etal-2018-glue} with perturbed text, and data sampled from the logs of a large e-commerce search engine. The Twitter sentiment140 dataset contains 1.6 million of tweets with 0.7 million distinct words~\cite{DBLP:conf/comsnets/SahniCCS17}. The Reddit dataset has 10 million of sentences with 2.7 million distinct words~\cite{DBLP:conf/emnlp/VolskePSS17}. For the GLUE benchmark, we consider MRPC, STSB, RTE, CoLA and SST2 datasets, that covering single sentence prediction, sentence similarity and paraphrase along with the language inference tasks. For the e-commerce product search logs, it contains 100 million lines of product and search texts with 3.2 million unique words.
Note that the Reddit dataset is unlabeled so we only use this corpus to measure the time efficiency and correction quality of various normalization techniques and not the performance on downstream machine learning tasks. 



\noindent\textbf{Baselines}. We consider those methods with different similarity measurements for comparison: 1) edit-distance with standard word dictionary. The current popular algorithm~\cite{DBLP:journals/jdiq/Al-Hussaini17} as well as the classic method~\cite{norvig2009natural} are included. 2) cosine similarity over pretrained word-embeddings space. We use Glove~\cite{pennington2014glove} and Fasttext~\cite{mikolov-etal-2018-advances} as the word-embeddings and apply maximum inner-product search via the FAISS library for searching over the high-dimensional space~\cite{8733051}. Note that there are several lexical normalization methods are not included in this research, because either the source codes are not shared~\cite{DBLP:conf/aclnut/SupranovichP15}, the methods require annotated lexical normalization datasets~\cite{DBLP:conf/lrec/GootRCCM20}, a long pipeline with several human-defined rules are needed~\cite{DBLP:conf/www/GuzmanL16}, the methods are built upon morphological and phonetic features that are defined by domain experts~\cite{han2011lexical} or the dependencies of code were out of maintenance~\cite{DBLP:conf/acl/Goot19}.

\noindent\textbf{Evaluation Metrics.} We evaluate FLAN as well as the aforementioned lexical normalization baselines in terms of: 1) computational efficiency, which evaluates the exact running time of every algorithm, 2) correction quality, measuring the goodness of correction with human evaluators, and 3) impact on downstream applications, namely Twitter sentiment classification and perturbed GLUE benchmark datasets. 



\noindent\textbf{Hyperparameter Settings}
For the hyperparameters in FLAN, we set \texttt{CHARLENS} to be $[3, 5, 7]$. Furthermore, we set the universe size to $|\mathcal{U}|=2^{32}$ and partition the space into $4$ bins. The $2$-universal hashing function we use in our experiments is $h(x)=(ax+b)\mod P$, where $a\sim[1,|\mathcal{U}|],b\sim[0,|\mathcal{U}|-1]$ and the prime number $P=2^{31}-1$. The random seed is also fixed for reproducibility. The number of repetitions are set to $T=20$. We note that the number of repetitions determines the memory and also the running time of the FLAN algorithm. A higher number of repetitions give us a higher quality normalization while a lower number gives us a faster algorithm. Every algorithm runs over 20 cores CPU with a frequency of 3.8 GHz. We set the threshold ratio $\alpha=0.2$ for removing low weight edges. Figure~\ref{fig:phi} provide a detailed analysis for selecting this threshold.


\subsection{Correction Efficiency}
We compare the running time of all the methods over large scale datasets. For the ``Indexing'' procedure, we first extract all the words from the text corpora along with the frequencies of the words. Then, the words are fed into every  algorithm, where the output is either the original word or the corrected one. This measures the overall time to create the correction mapping for the whole training set. Only the time used for lexical normalization is calculated for these benchmarks. 
Specifically, for the ``Single'' case, the whole algorithm is applied over one process. For the ``Multi'' case, we partition the workload equally over 20 processors. For the ``Inference'' step, we benchmark the overall time for mapping words to their normalized form following the indexing stage. As shown in Table~\ref{fig:time}, we observe that FLAN has a faster running time and scales better to the dataset size than the baseline methods across both the indexing and inference stages.

\begin{table}[!ht]
    \centering
    \begin{tabular}{rrcccc}
    \toprule
   \multirow{2}{*}{Dataset} &\multirow{2}{*}{Methods} &  \multicolumn{2}{c}{Indexing (mins)} & Inference \\ 
     && Single & Multi& (mins) \\ \midrule
     \multirow{5}{*}{Twitter}&FLAN ($\alpha=0.2$) &  $\mathbf{40}\bullet$ & $\mathbf{3}\bullet$  & $\mathbf{18}\bullet$   \\
    &\citet{DBLP:journals/jdiq/Al-Hussaini17} & $171$  & $16$ & $49$ \\
     &\citet{norvig2009natural} & $510$ & $41$ & $154$ \\ 
     &FAISS-Glove & $408$ & $25$ & $83$ \\
     &FAISS-Fasttext  & $44$  & $6$ & $29$  \\ 
    \midrule
    \midrule
     \multirow{5}{*}{Reddit}&FLAN ($\alpha=0.2$) & $\mathbf{59}\bullet$  & $\mathbf{12}\bullet$& $\mathbf{26}\bullet$ \\
     &\citet{DBLP:journals/jdiq/Al-Hussaini17} & $520$ & $46$ & $71$ \\
     &\citet{norvig2009natural} & $731$ & $93$ & $221$\\ 
     &FAISS-Glove  & $514$ &$29$& $101$\\
     &FAISS-Fasttext  & $70$ & $19$& $42$ \\
    \bottomrule 
\end{tabular}
    \caption{Running time of lexical normalization methods over Twitter and Reddit Datasets.  FLAN scales better to the dataset size and is faster over Indexing and Inference scenarios than the competing approaches.}\label{fig:time}
    \vspace{-2em}
\end{table}

We acknowledge that the computation time is impacted by the choice of programming language, specific libraries, and software engineering optimizations such as caching and precomputation. The core algorithm in FAISS is implemented in C++ while the rest of the methods we benchmark are implemented in Python. 


\subsection{Correction Effectiveness}

To evaluate the quality of the corrections made by a given lexical normalization method, we conducted a study with native English speakers to evaluate the quality of the correction methods. We first select 100 sentences from both the Twitter and Reddit datasets,feed the sentences into each of our algorithms, and then extract the corrected output sentences. We create a questionnaire for the corrected sentences and deploy to the Amazon Mechanical Turk
Five different native speakers evaluated the quality of each sentence after correction. Each reviewer was asked to label every corrected sentence as either ``Good", ``Neutral'', ``Bad'', or ``Not Sure''. We define the label ``Good'' as signifying the corrections make the meaning of the text more clear or more grammatically correct. The label ``Bad'' denotes that the corrections make the meaning of the text less clear or less gramatically correct. ``Neutral'' case, signifies that the corrections do not improve or diminish the clarity of the text.

\begin{table}[!ht]
    \centering
    \begin{tabular}{rrllll}
    \toprule
Datasets&Methods&Precision& Recall & F1-Score\\ \midrule
\multirow{5}{*}{Twitter}&FLAN ($\alpha=0.2$)& $60.45\%$ & $\mathbf{41.76\%}\bullet$& $\mathbf{49.39\%}\bullet$\\
&\citet{DBLP:journals/jdiq/Al-Hussaini17}& $37.93\%$& $35.71\%$& $36.79\%$\\
&\citet{norvig2009natural}& $51.79\%$& $28.57\%$& $36.83\%$\\
&FAISS-Glove& $\mathbf{71.43\%}\bullet$& $9.34\%$ & $16.52\%$\\
&FAISS-Fasttext&$65.28\%$& $24.18\%$ & $35.28\%$\\
\midrule
\midrule
\multirow{5}{*}{Reddit}&FLAN ($\alpha$=$0.2$)& $\mathbf{84.85\%}\bullet$& $\mathbf{34.33\%}\bullet$ & $\mathbf{48.88\%}\bullet$\\
&\citet{DBLP:journals/jdiq/Al-Hussaini17}& $42.53\%$ & $\mathbf{34.33\%}\bullet$& $37.99\%$\\
&\citet{norvig2009natural}& $66.00\%$ & $32.84\%$ & $43.85\%$ \\
&FAISS-Glove& $63.64\%$ & $17.16\%$ & $27.04\%$    \\
&FAISS-Fasttext& $75.71\%$ & $22.39\%$ & $34.56\%$\\
  \bottomrule
    \end{tabular}
    \caption{Human evaluation for the quality of word corrections. For Twitter dataset, FLAN has a higher Recall and F1-score. For the Reddit dataset, and FLAN has a higher Precision and F1-Score value than the baselines.}
    \label{tab:human}
\end{table}

To evaluate recall, we further conduct another human evaluation for judging if the input sentences contain any spelling errors or typos that require lexical normalization. We repeat the same process as above but the reviewers need to label every input sentence as "Yes" or "No".

To assess the results from this study, we consider ``good" and ``neutral'' as a correct result and regard ``bad'' as an incorrect one. The precision is calculated as the ratio between the number of correct results to the total number of corrections. Recall is defined as the fraction of problematic sentences that are corrected to good. The F1-Score is calculated based on Precision and Recall~\cite{muller2019enhancing,DBLP:conf/lrec/GootRCCM20}. 

The results are presented in Table~\ref{tab:human}. For the Twitter dataset, we observe that FLAN has the highest recall and F1 score value while the FAISS-Glove method has the highest precision score. For the Reddit dataset, FLAN has the highest precision and F1 score value compared to the baselines. However, we still observe some failure cases with FLAN, such as mapping ``evga" and ``vga'' together. Disambiguating such pairs would likely require more information on the surrounding context of a word. We defer this investigation for future work.

\subsection{Impact to Downstream Applications}
\noindent\textbf{Twitter Sentiment Analysis:} We evaluate the impact of lexical normalization over real-world noisy tweets. The task is to classify the sentiment of a given tweet as positive and negative. For the neural learning model, we use the summation of word vectors as the sentence representation, which is then mapped to a two-dimensional output vector via an affine transformation. The learning objective is to minimize the logistic loss between the predicted label and the ground truth label. The word vectors inside the model are randomly initialized and we set the dimension to $256$. Prior to training the model, we apply the various lexical normalization techniques we study in our eperiments. We report the accuracy on the testing set, which we also normalize, when we reach the best result on the corresponding validation set. 

\begin{table}[!ht]
    \centering
    \begin{tabular}{rll}
    \toprule
    Methods & Valid Accuracy & Test Accuracy\\  \midrule
        No Correction &  $79.44\%$ & $79.41\%$ \\
         FLAN ($\alpha=0.2$) & $\mathbf{79.54\%} \bullet$  &  $\mathbf{79.62\%} \bullet$  \\
        \citet{DBLP:journals/jdiq/Al-Hussaini17} & $79.08\%$  & $79.16\%$  \\
        \citet{norvig2009natural} & $79.06\%$ & $79.18\%$ \\
        FAISS + Glove & $79.42\%$ & $79.41\%$  \\
        FAISS + Fasttext & $79.44\%$  &  $79.41\%$  \\
    \bottomrule
    \end{tabular}
    \caption{Accuracy results on the Twitter Dataset. The FLAN improve the Accuracy on the validation set by $0.1\%$ and testing set by $0.2\%$ against all the baselines.}
    \label{tab:classifier}
\end{table}

As shown in Table~\ref{tab:classifier}, we observe that \citet{DBLP:journals/jdiq/Al-Hussaini17,norvig2009natural} do not improve the classification result, because of the large percentage of mismatch between the language style on Twitter and formal writing. FLAN does not introduce such a domain mismatch. \\ 

\noindent\textbf{Perturbed GLUE Benchmark:} To further investigate the impact of lexical normalization tools over the related NLP tasks, we consider 5 subtasks of the popular GLUE benchmark~\cite{wang-etal-2018-glue}. As the GLUE datasets are of high-quality, we follow previous approaches~\cite{doval2019towards,DBLP:journals/corr/abs-2104-08420} in randomly perturbing the words in the validation and testing dataset while keeping the training set fixed. We generate synthetic lexical errors at $20, 40,$ and $60\%$ rates of noise such that we perturb a sentence with probability equal to this rate and then select 1-2 characters uniformly at random in every word of the sentence to delete or replace with another random character. Note that the synthesised typos are different from the real errors that follow a more structured distribution. We use a pretrained DistilBert model~\cite{DBLP:journals/corr/abs-1910-01108}, which we then fine-tune over the training set with 10 epochs. We then evaluate on the perturbed test sets after applying a normalization algorithm as a cleaning step. We also include a ``No correction" baseline as part of our study.

The results can be found at Table~\ref{tab:glue}. We observe that with the rate of noises become higher, the relative improvement of FLAN \textit{w.r.t.} the ``No correctoin'' the rest competing approaches become larger on all the chosen subtasks.  It shows that FLAN has better capability to recover the words and improve the quality of the sentences.

\subsection{Detailed Inspections}
\noindent\textbf{Ablation Study on Threshold $\alpha$.}
In Figure~\ref{fig:phi}, we plot the effect of the graph pruning threshold $\alpha$ on the behavior of FLAN. When $\alpha=0$ we see that FLAN corrects nearly every word in the corpus. However, when we set $\alpha=0.1$ or $\alpha=0.2$, we note that this correction coverage drops rapidly, which empirically demonstrates the exponential decay from applying more repetitions that we discussed previously. We also plot the correction coverage of our  baseline methods for references. Based on these results, we selected $\alpha=0.2$ as the pruning threshold in our experiences since it provided a balance between covering words and not introducing too much noise. 

\begin{figure}[!ht]
    \centering
    \includegraphics[width=.95\linewidth]{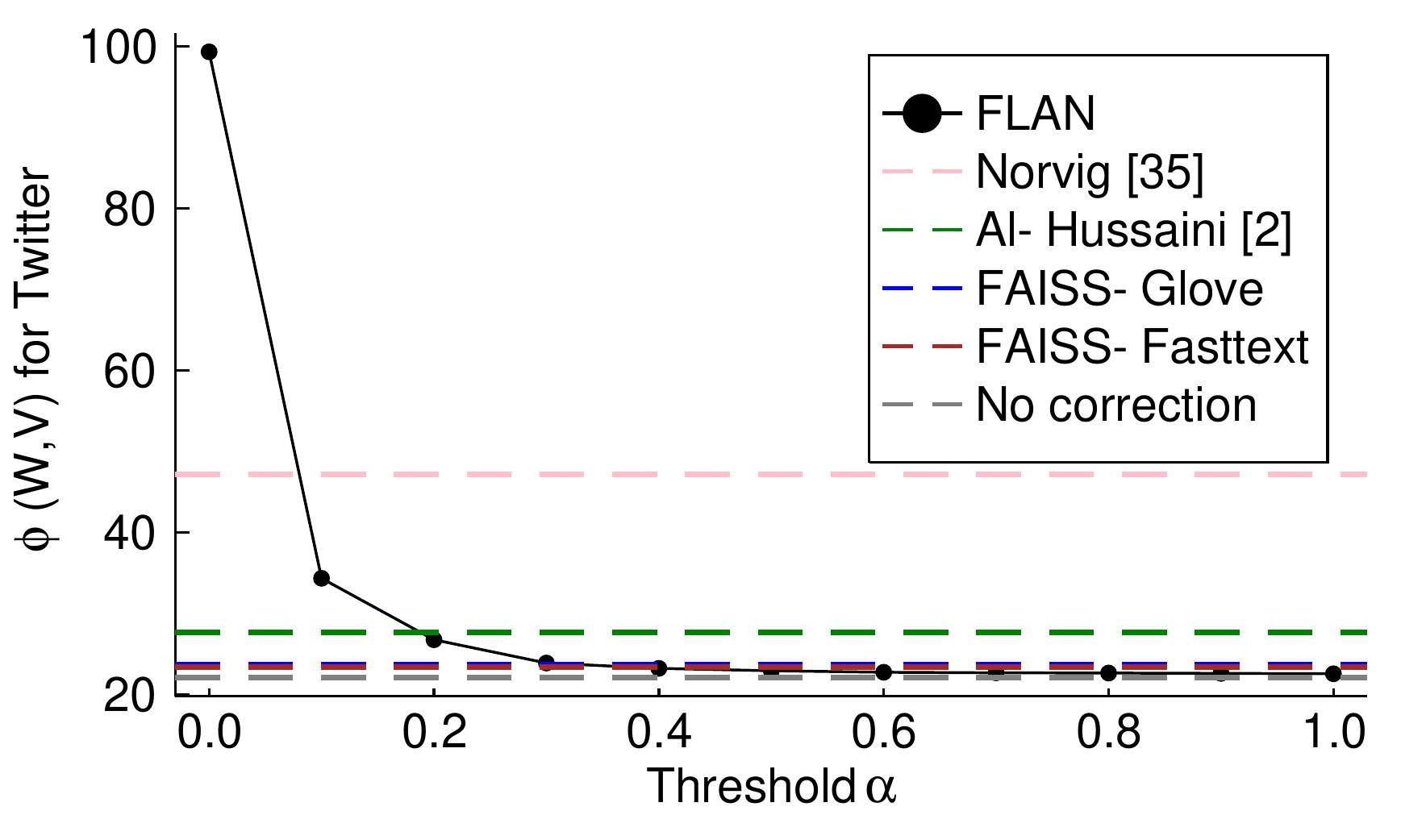}
    \includegraphics[width=.95\linewidth]{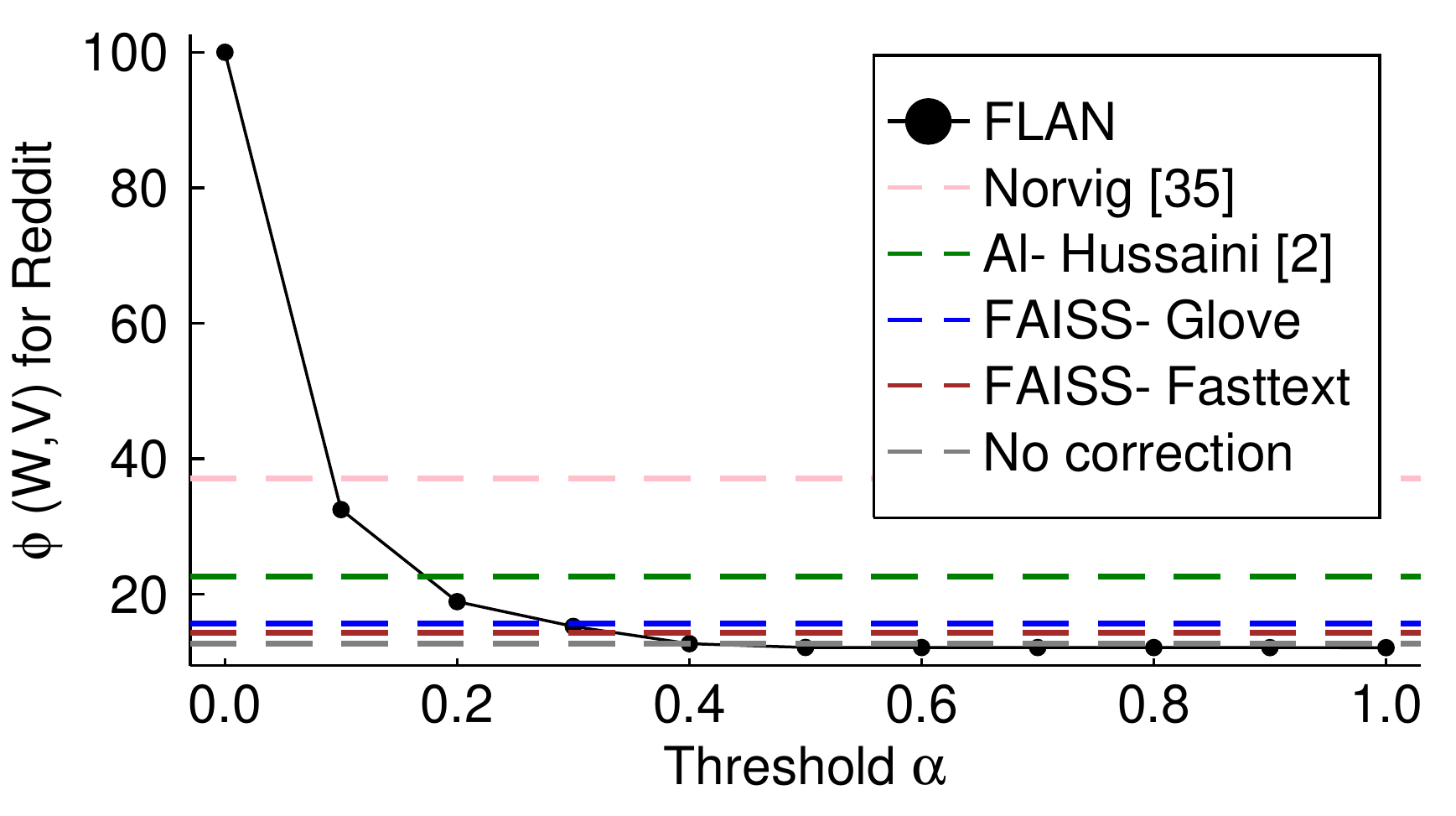}
\caption{Percentage of words get covered by the vocabulary ($\phi(\mathcal{W},\mathcal{V})$) on the Twitter and Reddit datasets. FLAN's coverage is determined by the threshold $\alpha$.}\label{fig:phi}
\end{figure}

\noindent\textbf{Case Study for Connected Components in the FLAN Graph.} We collect the results of select connected components after the LSH mapping, repetition, and pruning steps. The left column in Table~\ref{tab: result} is the representative word for the connected component while the right column illustrates other words in the  connected component that are mapped to the representative. 

\begin{table}[!ht]
\centering
\begin{tabular}{r|l}
\toprule
Representative & Connected Components \\\midrule
there&thereâ, therem, therea, ithere, therer\\\midrule
night&gnight, nightï, nightâ, gnightâ, dnight, nighti\\\midrule
friends& friend, friendsss, friendz, friendss, friendzz, \\
&friendsssss, myfriends, friendssss, vfriends, \\
&myfriend, friendâ, friend1 \\
\midrule
feeling& feelin, feelingz, feelingg, feelinga, feelinf,\\
&feelinfg \\
\midrule
morning&mornings, gmorning, morningg, gmornin, \\ 
&morningss, morningo, gmorningg, smorning, \\
&morningstar, morningâ, morningon \\
\midrule
amazing& amazingg, amazinggg, mazing, mazinggg,\\
&amazinggggg, amazinggggggg, amazingggg,\\
&amazinggggggggggg, amazingggggggggg, \\
&mazingggg, amazinggggggggg, soamazing, \\
& amazings, amazingggggg \\
\bottomrule 	
\end{tabular}
\caption{Connected Components in the constructed graph of  FLAN over Twitter Dataset. The left column is the representative word for every connected component and the right column shows other words in the corresponding connected component. We observe that FLAN can capture patterns from typographical errors on keyboards.}\label{tab: result}
\end{table}

From Table~\ref{tab: result}, we observe that FLAN can successfully group words with minor character difference into the same connected component. These results also provide evidence to the effectiveness of our graph pruning strategy in preventing spurious hash collisions from leading to unrelated word matches. This table also shows that FLAN tends to convert words of plural form into singular form or vice versa based on the frequency distribution of these variations in the dataset. In addition, FLAN is able to map infrequent words to a meaningful and frequent words in the indexed vocabulary, such as ``amazinggggg'' and ``amazingggggg''. In addition, FLAN captures typos related to the characters in close keyboard proximity such as ``feelinf'' as well as fixing the habit of double typing (``feelingg'' and ``gmorningg''). 

\begin{table*}[!ht]
    \centering
    \begin{tabular}{rcrcccccc}
    \toprule
 Subtask & Perturb Rate &Metrics& No corr.& FLAN ($\alpha=0.2$)& \citet{norvig2009natural} & \citet{DBLP:journals/jdiq/Al-Hussaini17}& FAISS-Glove& FAISS-Fasttext\\ \midrule
\multirow{2}{*}{MRPC}&\multirow{2}{*}{$20\%$}& Accuracy & $78.67\%$ & $\mathbf{78.92\%}\bullet$& $\mathbf{78.92\%}\bullet$& $74.26\%$ & $78.18\%$ & $78.18\%$\\
&&F1-Score & $84.26\%$ & $84.83\%$ & $84.07\%$& $82.98\%$ &$\mathbf{ 84.89\%}\bullet$ & $84.83\%$ \\ 
\midrule
\multirow{2}{*}{MRPC}&\multirow{2}{*}{$40\%$}& Accuracy  & $76.22\%$ & $\mathbf{77.94\%}\bullet$ & $77.69\%$& $74.51\%$ & $77.43\%$ & $77.69\%$ \\
&&F1-Score & $84.24\%$ & $\mathbf{85.09\%}\bullet$ & $84.17\%$& $83.38\%$ & $84.71\%$ & $84.49\%$\\
\midrule
\multirow{2}{*}{MRPC}&\multirow{2}{*}{$60\%$} & Accuracy & $67.89\%$ & $\mathbf{69.11\%}\bullet$& $67.11\%$& $65.44\%$ & $67.64\%$ & $67.89\%$\\
&&F1-Score & $74.10\%$ & $\mathbf{78.64\%}\bullet$& $74.80\%$& $72.62\%$ & $73.60\%$& $73.85\%$\\
\midrule
\midrule
\multirow{2}{*}{STSB}&\multirow{2}{*}{$20\%$}& Pearson & $72.02\%$& $\mathbf{72.55\%}\bullet$& $71.93\%$& $61.70\%$& $69.49\%$& $69.54\%$\\
&&Spearman & $71.61\%$ & $\mathbf{72.39\%}\bullet$ & $71.83\%$& $61.74\%$& $69.43\%$& $69.13\%$\\
\midrule
\multirow{2}{*}{STSB}&\multirow{2}{*}{$40\%$}& Pearson & $70.80\%$ &  $\mathbf{72.76\%\bullet}$ & $71.30\%$& $62.57\%$& $71.11\%$& $70.70\%$ \\
&&Spearman & $69.70\%$ & $\mathbf{71.67\%\bullet}$& $70.48\%$& $62.09\%$& $70.39\%$& $69.64\%$ \\
\midrule
\multirow{2}{*}{STSB}&\multirow{2}{*}{$60\%$}& Pearson & $65.73\%$&$\mathbf{70.25\%}\bullet$& $68.93\%$& $60.57\%$& $66.75\%$& $67.30\%$\\
 &&Spearman& $65.02\%$ & $\mathbf{69.90\%}\bullet$ & $68.39\%$& $60.65\%$& $66.65\%$& $66.81\%$\\
\midrule
\midrule
RTE & $20\%$  & Accuracy  & $59.57\%$ & $62.09\%$ & $58.85\%$ & $58.07\%$ & $\mathbf{61.46\%}\bullet$ & $59.21\%$  \\
RTE & $40\%$     & Accuracy &$57.76\%$ &$61.07\%$  &$57.04\%$ &$56.32\%$ &$59.57\%$ &$\mathbf{61.31\%}\bullet$\\
RTE & $60\%$ & Accuracy   &$56.32\%$ &$\mathbf{60.29\%}\bullet$ &$58.85\%$ &$54.15\%$ & $57.40\%$ &$57.40\%$ \\
\midrule
\midrule
CoLA & $20\%$  & Matthews & 	$46.00\%$ &$\mathbf{46.50\%}\bullet$& 	$42.82\%$& 	$12.25\%$& 	$39.98\%$& 	$41.03\% $\\
CoLA & $40\%$  & Matthews & 	$29.71\%$ & $\mathbf{30.92\%}\bullet$& 	$30.21\%$& 	$10.41\%$& 	$29.98\%$& 	$28.97\% $\\
CoLA & $60\%$ & Matthews & 	$9.00\%$ & $\mathbf{16.35\%}\bullet$& 	$13.44\%$& 	$15.66\%$& 	$12.86\%$& 	$15.18\% $\\
\midrule
\midrule
SST2& $20\%$ & Accuracy &$77.18\%$ & $78.72\%$&$78.93\%$&$76.76\%$ &$79.16\%$ &$\mathbf{79.23\%}\bullet$ \\
SST2& $40\%$ & Accuracy & $69.73\%$ & $\mathbf{71.23\%}\bullet$  & $70.02\%$ & $68.02\%$ & $70.74\%$ & $70.14\%$ \\
SST2& $60\%$ & Accuracy & $57.31\%$  & $\mathbf{59.42\%}\bullet$ & $58.22\%$ &$57.32\%$ & $57.67\%$  &$58.12\%$ \\
         \bottomrule
    \end{tabular}
    \caption{Preturbed GLUE benchmark with all the lexical normalization algorithms. We observe that when the noisy level become larger and larger, the FLAN can help to recover more words and get better results than all the competing methods.}
    \label{tab:glue}
\end{table*}

Ultimately, these results suggest that lexical normalization can aid in improving the quality of text-based models applied to noisy data, and FLAN provides a computationally scalable alternative to existing methods.


\subsection{Large-Scale Case Study: Product Search} 
We further conducted offline experiments applying FLAN to normalize a dataset of hundreds of millions of search queries and product titles sampled from the logs of a large e-commerce search engine~\cite{DBLP:conf/kdd/NigamSMLDSTGY19}. The structure of neural model, learning objective function and evaluation metrics follow the same settings as \citet{DBLP:conf/kdd/NigamSMLDSTGY19}. We observed that the edit-distance based methods~\cite{DBLP:journals/jdiq/Al-Hussaini17,norvig2009natural}, were prohibitively slow to apply at this scale, requiring days to complete. Meanwhile, FLAN finished normalizing the entire dataset in roughly $4$ hours. The FAISS model, on the other hand, achieved poor recall when compared to FLAN and required the additional overhead of learning these word representations on the e-commerce query-product logs.

\section{Conclusion}

In this work, we investigated lexical normalization for cleaning the real-world text data. We propose FLAN, a scalable randomized algorithm for cleaning and canonicalizing massive text data. 
By leveraging advances in randomized hashing, FLAN considerably reduces the computational complexity for large-scale text normalization. By leveraging the advance of MinHash, the approximated all word pairs are efficient computed. Compared with existing approaches, FLAN does not need extra annotation, rule definition and feature generation. 

Moreover, we propose using a graphical structure to detect and clean undesirable word associations due to random hash collisions to stabilize the correction quality. We further provide theoretical guarantees on the robustness of our algorithm with upper bounds on the false positive and false negative probabilities.

In experimental studies, we benchmark with several prevalent methods and several large-scale datasets. In running time analyses, FLAN demonstrates a faster computation speed over against methods from edit-distance models and maximum inner product search in high-dimensional word-embedding spaces. When measuring the quality of corrections, FLAN has relatively a higher recall and F1 score against the baselines as measured by human evaluation. Finally, we evaluate the end-to-end benefit of FLAN on two machine learning tasks: Twitter sentiment analysis and perturbed GLUE benchmarks, where we find that FLAN consistently improves the quality of noisy texts and help the generalization of the model.


\bibliographystyle{ACM-Reference-Format}
\bibliography{reference}

\end{document}